\newtheorem{definition}{Definition}
\newtheorem{lemma}{Lemma}
\newtheorem{proposition}{Proposition}
\let\emptyset\varnothing
\def\blfootnote{\xdef\@thefnmark{}\@footnotetext}
\title{Slope and generalization properties of neural networks}
\author{Anton Johansson\textsuperscript{1*} \And  Niklas Engsner\textsuperscript{1} \And Claes Strannegård\textsuperscript{1} \And Petter Mostad\textsuperscript{1}
}
\begin{document}

\maketitle

\begin{abstract}
  Neural networks are very successful tools in for example advanced classification. 
From a statistical point of view, fitting a neural network may be seen as a kind of regression,
where we seek a function from the input space to a space of classification probabilities that follows the "general" shape of the data, but avoids overfitting by avoiding memorization of individual data points.
In statistics, this can be done by controlling the geometric complexity of the regression function. We propose to do something similar when fitting neural networks by controlling the slope of the network.

After defining the slope and discussing some of its theoretical properties, we go on to show empirically in examples, using ReLU networks, that
the distribution of the slope of a well-trained neural network classifier is generally independent of the width of the layers in a fully connected network, and that the mean of the distribution only has a weak dependence on the model architecture in general. The slope is of similar size throughout the relevant volume, and varies smoothly. It also behaves as predicted in rescaling examples.
We discuss possible applications of the slope concept, such as 
using it as a part of the loss function or stopping criterion during network training, 
or ranking data sets in terms of their complexity.
\end{abstract}

\section{Introduction}

\blfootnote{\textsuperscript{1}Chalmers University of Technology, Gothenburg, Sweden.}
\blfootnote{\textsuperscript{*}Correspondence to: Anton Johansson <johaant@chalmers.se>.}

Consider the objective of classifying items, for example images, which may be represented as points in ${\mathbb R}^{n_0}$, into $n_c$ classes. One approach is to find a map $f\in{\cal A}$ where ${\cal A}$ is the set of all continuous maps ${\mathbb R}^{n_0}\to{\mathbb R}^{n_c}$ and interpret $\operatorname{softmax}_i(f(x))$
as the probability that $x$ is in class $i$. We aim for a map that fits observed data in terms of the corresponding loss function while simultaneously avoiding overfitting to these data.

A general way to avoid overfitting is to control the "geometric complexity" of $f$, interpreting geometric complexity in a similar way as in many corresponding methods from classical statistics. A common way to limit the complexity is to define a subset ${\cal B}\subset{\cal A}$, consisting for example of all those maps expressable with a neural network with a given architecture, and to use a particular stochastic algorithm to generate a suitable $f$ that balances loss minimization and avoidance of overfitting.

As the properties we are trying to balance, the loss and the overfitting, both depend on $f$ and nothing else, we argue that control of geometric complexity should be measured in terms of properties defined directly on $f$, and not in terms of any particular neural network representation of $f$. 
Calling such properties geometric properties, we focus in this paper on what we call the slope, essentially the 
largest speed with which $f$ moves its input at a point.  

Using the hypothesis that current algorithms for training neural network classifiers are indeed successful when they control the geometric complexity of $f$ in a meaningful way, we use these algorithms to generate successful classifiers $f_1,\dots,f_k$ for a given dataset. Under our hypothesis, these functions should then have similar geometric properties, and in particular similar slope properties. These slope properties should depend only on the problem at hand, i.e., the dataset, and not on the particular neural network architecture used, except as a consequence of the approximations involved.

In this paper, we first define and study some basic properties of the slope, before studying what happens with the slope during learning for ReLU networks. We then go on to check the hypothesis above by varying the network architecture, data sets and the distance between the points in order to see how these parameters affect the slope. Finally, we discuss applications of the slope, for example to directly target a particular slope during the training of a neural network, or to use the slope for regularization.

\subsection{Related work and our contribution}
Understanding how and why neural network methods work as well as they do is clearly a vast area of research, and one that has been attacked in a number of ways. Some examples are information theoretic approaches \cite{DBLP:conf/itw/TishbyZ15}, \cite{DBLP:journals/npl/LiL21}, \cite{DBLP:conf/isit/Huang0ZW19}, 
classical statistical learning theory approaches \cite{DBLP:conf/iclr/ZhangBHRV17}, \cite{DBLP:journals/jmlr/BartlettHLM19} and others \cite{DBLP:conf/icml/ZhouF18}. A geometrical perspective has been considered in different contexts,
 e.g, by relating the manifold structure of the data distribution to generalization properties,
 see  \cite{DBLP:journals/corr/abs-1805-10451}, \cite{DBLP:journals/corr/abs-1909-11500}, or by understanding the 
inductive bias of deep neural networks by studying how the distance to the decision boundary varies as the data
representation is changed, see \cite{DBLP:journals/corr/abs-2104-14372}.

A paper taking a somewhat similar view as ours is  
\cite{yoshida2017spectral}. In this paper 
Yoshida et al impose an upper bound on the spectral norm of the local affine transformation of ReLU networks by enforcing the spectral norm of each individual weight matrix in the network to be small. This provides 
a regularization method that is related to the slope as we define it, but only indirectly. In 
\cite{yoshida2017spectral} they then go on to show that 
their regularization method has attractive properties. 

Contributions:
\begin{itemize}
    \item We define the concept of slope to capture the geometric complexity of regression maps.

\item We provide theoretical insights into properties associated with the slope, indicating how it can be used to capture aspects of the evolution and structure of the underlying geometry.

\item Additionally, we validate our theory with empirical results for ReLU networks and show
that the distribution of the slope for well-trained models is close to invariant to the width of the hidden layers in fully connected network, and that the mean of the slope distribution only has a weak dependence on the model architecture for both fully connected and convolutional networks.
\end{itemize}

\section{Notation and definitions}

\subsection{Preliminaries}
A neural network $f: \mathbb{R}^{n_0} \rightarrow \mathbb{R}^{n_c}$ will for us consist of
\begin{itemize}
    \item a sequence of positive integers $n_0,n_1,...,n_n=n_c$, where $n_1,\dots,n_n$ denote the width of the hidden layers,
    \item for $i= 1,...,n$, an $(n_i \times n_{i-1})$-dimensional matrix $W_i$ and a vector $b_i$ of length $n_{i}$, and
    \item a continuous activation function $g:{\mathbb R}\rightarrow{\mathbb R}$ applied separately to each dimension. 
\end{itemize}
We define $f^0(x)=x$ and for $i=1,\dots,n-1$ a continuous map $f^i:{\mathbb R}^{n_0}\to{\mathbb R}^{n_i}$ by setting
$$
f^i(x) = g(W_if^{i-1}(x)+b_i)
$$
while we set $f(x) = f^n(x) = W_nf^{n-1}(x)+b_n$. To use the network for classification, we apply the $\operatorname{softmax}$ function to $f(x)$ to produce an output which can be interpreted as a probability distribution on the set of $n_c$ classes. 

We are mainly concerned with neural networks with activation functions given by the Rectified Linear Units (ReLU) \cite{DBLP:conf/icml/NairH10}, referred to as \textit{ReLU networks}.
Then 
$$
g(x) = \max(0,x). 
$$

For ReLU networks we can additionally define the concept of an \textit{activation region}, the largest open connected sets $R \subset \mathbb{R}^{n_0}$ where $f$ can be represented as an affine transformation $f(x) = W_Rx + b_R, \forall x \in R$.
These regions correspond to binary patterns indicating which neurons that are activated when passing an input through the network \cite{DBLP:conf/nips/HaninR19}. While these regions possess many interesting properties, for our purposes we will mainly use that for $x$ in an activation region $R$, the Jacobian $J_f(x) = W_R$ is constant and will thus be denoted by $J_f(R)$.

\subsection{Slope}

The central geometric property we will study in this paper is the slope. All proofs are relegated to the Appendix. 

\begin{definition}
Given a continuous function $f:{\mathbb R}^{n_s}\to{\mathbb R}^{n_c}$ and some $p$ with $1\leq p\leq \infty$, we define its slope (or p-slope) at $x\in{\mathbb R}^{n_s}$ as 
$$
\operatorname{Slope}_f(x) = \sup_{v\in B^*} \left(\lim_{t\downarrow0}\frac{||f(x + t v)-f(x)||_p}{t}\right)
$$
where $||\cdot||_p$ denotes the p-norm\footnote{$||x||_p=\left(\sum_i|x_i|^p\right)^{1/p}$}, the limit is taken over positive $t$, and 
$$
B^* = \{v\in{\mathbb R}^{n_s}:||v||_p=1\}.
$$
The slope is undefined unless the limit exists for all $v\in B^*$. 
\end{definition}




\begin{proposition}
If the Jacobian $J_f(x)$ exists at $x$, then 
$$
\operatorname{Slope}_f(x) = \max_{v\in B^*}||J_f(x)v||_p = ||J_f(x)||_p. 
$$
\label{prop:slope_1}
\end{proposition}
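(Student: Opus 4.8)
The plan is to reduce the statement to the standard fact that the operator norm of a linear map is attained on the unit sphere. First I would make precise the hypothesis ``the Jacobian $J_f(x)$ exists at $x$'' as meaning that $f$ is (totally) differentiable at $x$, i.e. that there is a linear map $J_f(x)$ with
$$
f(x+h) = f(x) + J_f(x)h + r(h), \qquad \frac{\|r(h)\|_p}{\|h\|_p} \longrightarrow 0 \ \text{ as } h\to 0 .
$$
This is the reading compatible with the claim: the slope is only defined when the inner limit exists for \emph{every} $v\in B^*$, and mere existence of the partial derivatives would not guarantee directional limits along non-coordinate rays.

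Next I would fix $v\in B^*$ and substitute $h=tv$ with $t>0$. Since $\|tv\|_p = t$, this gives
$$
\frac{f(x+tv)-f(x)}{t} = J_f(x)v + \frac{r(tv)}{t}, \qquad \left\|\frac{r(tv)}{t}\right\|_p = \frac{\|r(tv)\|_p}{t} \longrightarrow 0 .
$$
Applying the reverse triangle inequality $\bigl|\,\|a+c\|_p - \|a\|_p\,\bigr|\le\|c\|_p$ with $a=J_f(x)v$ and $c=r(tv)/t$ then shows $\lim_{t\downarrow0}\|f(x+tv)-f(x)\|_p/t = \|J_f(x)v\|_p$. In particular this limit exists for every $v\in B^*$, so $\operatorname{Slope}_f(x)$ is well defined and equals $\sup_{v\in B^*}\|J_f(x)v\|_p$.

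Finally I would observe that $B^*$ is closed and bounded, hence compact, in the finite-dimensional space $\mathbb{R}^{n_s}$, and that $v\mapsto\|J_f(x)v\|_p$ is continuous (a linear map followed by a norm); therefore the supremum is attained and may be written as a maximum. The identity $\max_{\|v\|_p=1}\|J_f(x)v\|_p = \|J_f(x)\|_p$ is then simply the definition of the matrix norm induced by the $p$-norm. The whole argument is uniform in $p\in[1,\infty]$, compactness of the $\ell^\infty$ unit sphere in finite dimensions causing no difficulty.

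There is no deep obstacle here; the only point requiring genuine care is the first step, namely pinning down the meaning of ``the Jacobian exists'' so that the $o(t)$ remainder estimate is available along every ray, since that is exactly what makes the inner limit exist and the slope well defined. Everything after that is the routine passage from differentiability to directional derivatives together with compactness of the unit sphere.
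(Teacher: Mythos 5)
Your proposal is correct and follows essentially the same route as the paper's proof: expand $f(x+tv)=f(x)+tJ_f(x)v+o(t)$, conclude that the directional limit equals $\|J_f(x)v\|_p$ for every $v\in B^*$, and identify the supremum over the unit sphere with the induced matrix $p$-norm. Your version is simply more explicit about the total-differentiability hypothesis, the reverse triangle inequality, and the compactness argument turning the supremum into a maximum, all of which the paper leaves implicit.
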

Here $||J_f(x)||_p$ denotes the the matrix p-norm of the Jacobian. Note that when $p=2$, this is the maximum singular value of $J_f(x)$, also called the spectral norm of $J_f(x)$. 
When $p=1$ it is the maximum over the columns of $J_f(x)$ of the sum of the absolute values of the entries in the column. When $p=\infty$ it is the maximum over the rows of $J_f(x)$ of the sum of the absolute values of the entries in the row. 

\begin{proposition}
If $f$ is represented by a neural network where the activation function $g$ is continuously differentiable, then the Jacobian is a continuous function. 
If the Jacobian is a continuous function, 
then the slope is a continuous function. 
\end{proposition}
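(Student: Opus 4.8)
The statement is really two separate implications, and the plan is to prove each in turn.

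\emph{First implication ($g$ of class $C^1$ $\Rightarrow$ $J_f$ continuous).} I would argue by induction on the layer index $i$ that each $f^i$ is continuously differentiable. The base case $f^0(x)=x$ is immediate, its Jacobian being the constant identity matrix. For the inductive step, write $f^i = G_i \circ A_i \circ f^{i-1}$, where $A_i(y)=W_iy+b_i$ is affine (hence $C^1$, with constant Jacobian $W_i$) and $G_i$ denotes the coordinatewise map $(y_1,\dots,y_{n_i})\mapsto(g(y_1),\dots,g(y_{n_i}))$. The key observation is that $G_i$ is $C^1$ precisely because $g$ is: its Jacobian at $y$ is the diagonal matrix $\operatorname{diag}(g'(y_1),\dots,g'(y_{n_i}))$, which depends continuously on $y$. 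Since a composition of $C^1$ maps is $C^1$ by the chain rule, $f^i$ is $C^1$; in particular $f=f^n=A_n\circ f^{n-1}$ is $C^1$ (dropping the final activation only helps), so $J_f$ exists everywhere and is continuous. If desired, the chain rule also yields the explicit product formula for $J_f(x)$, but continuity is all that is needed here.

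\emph{Second implication ($J_f$ continuous $\Rightarrow$ $\operatorname{Slope}_f$ continuous).} Here I would first invoke Proposition~\ref{prop:slope_1}: since the hypothesis is exactly that $J_f(x)$ exists at every $x$, that proposition gives $\operatorname{Slope}_f(x)=\|J_f(x)\|_p$, the matrix $p$-norm of the Jacobian, for all $x$. Thus $\operatorname{Slope}_f$ is the composition of the continuous map $x\mapsto J_f(x)$ with the map $A\mapsto\|A\|_p$ on the (finite-dimensional) space of $n_c\times n_s$ matrices, so it suffices to check that the latter map is continuous. This holds because $A\mapsto\|A\|_p$ is a norm on that matrix space: subadditivity and homogeneity follow directly from $\|A\|_p=\sup_{\|v\|_p=1}\|Av\|_p$, and the reverse triangle inequality gives $\bigl|\,\|A\|_p-\|B\|_p\,\bigr|\le\|A-B\|_p$, which tends to $0$ as $A\to B$ entrywise by equivalence of norms in finite dimension.

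\emph{Main obstacle.} There is no deep difficulty; the only things requiring care are bookkeeping. In the first part one must verify cleanly that "apply $g$ coordinatewise" is genuinely $C^1$ with the stated diagonal Jacobian and that the layerwise composition is exactly as claimed, including the affine final layer. In the second part the only subtlety is justifying the pointwise use of Proposition~\ref{prop:slope_1}, which is licensed precisely by the standing hypothesis that the Jacobian exists everywhere; everything else is standard multivariable calculus together with equivalence of norms on a finite-dimensional space.
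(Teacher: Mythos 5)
Your proposal is correct and takes essentially the same route as the paper: the first implication via the chain rule applied layer by layer (linear maps and the $C^1$ activation each giving continuous derivatives), and the second via Proposition~\ref{prop:slope_1} together with continuity of the matrix $p$-norm. You simply spell out the induction and the norm estimates that the paper's brief proof leaves implicit.
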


\begin{proposition}
If the Jacobian is a continuous function 
and if $\operatorname{Slope}_f(x)\leq K$ for all $x\in{\mathbb R}^{n_0}$ then for all pairs of points $x,y\in{\mathbb R}^{n_0}$, 
\begin{equation}
||f(x)-f(y)||_p \leq K ||x-y||_p
\label{eq:unifcont}
\end{equation}
\label{prop:distance}
\end{proposition}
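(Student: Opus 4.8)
The plan is to reduce the statement to a one-dimensional fundamental-theorem-of-calculus argument along the line segment joining $x$ and $y$. I read the hypothesis ``the Jacobian is a continuous function'' as saying that $J_f$ is defined at every point and that $x\mapsto J_f(x)$ is continuous, i.e.\ that $f$ is $C^1$. Fix $x,y\in{\mathbb R}^{n_0}$, set $\gamma(t)=x+t(y-x)$ for $t\in[0,1]$, and let $\phi=f\circ\gamma:[0,1]\to{\mathbb R}^{n_c}$. By the chain rule $\phi$ is continuously differentiable with $\phi'(t)=J_f(\gamma(t))(y-x)$, so everything is reduced to controlling the size of $\phi'$.

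Next I would apply the vector-valued fundamental theorem of calculus coordinatewise to write $f(y)-f(x)=\phi(1)-\phi(0)=\int_0^1\phi'(t)\,dt$, and then take $p$-norms using the integral triangle inequality $\left\|\int_0^1 h(t)\,dt\right\|_p\le\int_0^1\|h(t)\|_p\,dt$, valid for continuous $h$. It then remains only to bound the integrand pointwise: by Proposition~\ref{prop:slope_1}, $\|J_f(\gamma(t))(y-x)\|_p\le\|J_f(\gamma(t))\|_p\,\|y-x\|_p=\operatorname{Slope}_f(\gamma(t))\,\|y-x\|_p\le K\|y-x\|_p$ for every $t$, where the final inequality is the hypothesis. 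Integrating this constant bound over $[0,1]$ yields $\|f(y)-f(x)\|_p\le K\|y-x\|_p$, which is exactly \eqref{eq:unifcont}.

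There is no deep obstacle here; the work is mostly in getting the bookkeeping right. The three points that need care are: (i) confirming the continuity hypothesis is strong enough to make $f$ genuinely $C^1$, so the chain rule and the fundamental theorem of calculus apply rather than merely an almost-everywhere statement; (ii) justifying the triangle inequality for the vector-valued integral, which can be done by applying the scalar version to $t\mapsto\langle u,\phi'(t)\rangle$ for an appropriate dual unit vector $u$, or simply by passing to the limit in Riemann sums and invoking the ordinary triangle inequality; and (iii) noting that the matrix $p$-norm in Proposition~\ref{prop:slope_1} is precisely the operator norm induced by $\|\cdot\|_p$, so that $\|Jv\|_p\le\|J\|_p\|v\|_p$ holds by definition. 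As an aside, an analogous estimate survives when $f$ is only locally Lipschitz and piecewise affine (the ReLU case), since then $f$ is differentiable almost everywhere and one can still integrate $\phi'$ along almost every segment, or perturb the segment off the measure-zero ``boundary'' set; but the proposition as stated only requires the clean $C^1$ argument above.
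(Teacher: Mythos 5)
Your proof is correct and takes essentially the same route as the paper: both integrate the derivative of $f$ along the segment $t\mapsto x+t(y-x)$ and bound the result using $\operatorname{Slope}_f(\cdot)=\|J_f(\cdot)\|_p\le K$. The only cosmetic difference is that the paper factors out $(y-x)$, integrates the matrix $J_f$ and controls the norm of that matrix-valued integral via Jensen's inequality, whereas you bound the vector-valued integrand $\|J_f(\gamma(t))(y-x)\|_p$ pointwise and apply the integral triangle inequality — the same estimate with slightly different bookkeeping.
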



If the output space has only one dimension, we see from Proposition~\ref{prop:slope_1} that whenever
the gradient $\triangledown f(x)$ exists at a point $x$ we have  $\operatorname{Slope}_f(x) = || \triangledown f(x)||_p$.
If the function $f$
is a type of regression function adapting to data, we would expect the slope to vary quite a bit, from zero at local extremes to larger values in between such points. 

Consider instead the case where $f(x)$ is multidimensional and the Jacobian exists.  Then we get from Proposition~\ref{prop:slope_1} that 
$$
\operatorname{Slope}_f(x) = \max_{v\in B^*} ||J_f(x)v||_p 
= \max_{v\in B^*} ||\triangledown(f\cdot v)(x)||_p 
= \max_{v\in B^*}\operatorname{Slope}_{f\cdot v}(x).
$$
In other words, we can understand the slope as follows: Take the output of $f$, project it along some direction $v$ and take the p-norm of the gradient at $x$. Then maximize over all possible directions $v$. 

If $f$ is used together with a softmax function as a classifier, we would expect that, at all points $x$, some output coordinates are increasing while others are decreasing. In other words, there will always be directions in the output space where the slope in that direction is nonzero. Thus the slope as we define it is unlikely to be 
zero anywhere, and is not so much connected to local extremes as it is to the speed at which the output changes. 

A consequence is that it is meaningful to study the average slope $f$. More specifically, 
\begin{definition}
 We define the slope of a network as the expectation of the slope when
$x$ has the distribution of the input data.
\end{definition}

Note that the distribution of the input data is unknown. However, we can estimate the quantity above by using the training data points which are a sample from the distribution. In our results, we will see that the variation of the slope across input points $x$ is often remarkably small, making the concept defined above a useful one. 

\subsection{Slopes of ReLU networks}

Our examples are all ReLU networks. For these, the Jacobian does not exist everywhere, but the slope still exists. All points $x$ inside an activation region $R$ have the same Jacobian $J_f(R)$, so we may define 
$$
\operatorname{Slope}_f(R) = \operatorname{Slope}_f(x) = ||J_f(R)||_p. 
$$

\begin{proposition}
If $f$ is represented by a ReLU network then $\operatorname{Slope}_f(x)$ exists for all $x\in{\mathbb R}^{n_0}$ and 
$$
\operatorname{Slope}_f(x) \leq\max_{R\,:\, x\in\overline{R}}\,\,\operatorname{Slope}_f(R)
$$
where $\overline{R}$ denotes the closure of $R$. 
\end{proposition}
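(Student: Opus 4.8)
The plan is to exploit the piecewise-affine structure of ReLU networks, reducing the statement to a one-line estimate on each affine piece.

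\textbf{Step 1: the local picture.} First I would recall the standard structural facts about ReLU networks (see \cite{DBLP:conf/nips/HaninR19}): there are only finitely many activation regions $R_1,\dots,R_N$, each a nonempty open convex (polyhedral) set; their closures cover ${\mathbb R}^{n_0}$; and on each $\overline{R_j}$ the map $f$ coincides, by continuity, with the affine map $y\mapsto J_f(R_j)y+b_{R_j}$. Fix $x\in{\mathbb R}^{n_0}$ and put ${\mathcal R}(x)=\{R_j : x\in\overline{R_j}\}$, a finite nonempty set; the right-hand side of the asserted inequality is precisely $\max_{R\in{\mathcal R}(x)}\operatorname{Slope}_f(R)$. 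Since $\bigcup_{R\notin{\mathcal R}(x)}\overline R$ is a closed set not containing $x$, there is $\varepsilon>0$ with the ball $B(x,\varepsilon)$ contained in $\bigcup_{R\in{\mathcal R}(x)}\overline R$.

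\textbf{Step 2: existence of the one-sided directional derivatives.} Fix a unit vector $v\in B^*$. For $0<t<\varepsilon$ the point $x+tv$ lies in $\bigcup_{R\in{\mathcal R}(x)}\overline R$, while for each such $R$ the set $\{t\ge0 : x+tv\in\overline R\}$, being the intersection of a ray with a closed convex set, is a closed interval (possibly empty or a single point). Finitely many closed intervals cover $(0,\varepsilon)$, so — choosing a sequence $t_n\downarrow0$ and passing to a subsequence lying in one of them — some region $R^*=R^*(v)\in{\mathcal R}(x)$ satisfies $x+tv\in\overline{R^*}$ for all $t$ in an initial segment $(0,t_1)$; since also $x\in\overline{R^*}$, the segment $\{x+tv:0\le t<t_1\}$ lies in $\overline{R^*}$. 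Hence for $0<t<t_1$,
\[
\frac{\|f(x+tv)-f(x)\|_p}{t}=\frac{\bigl\|\,J_f(R^*)(x+tv)+b_{R^*}-J_f(R^*)x-b_{R^*}\,\bigr\|_p}{t}=\|J_f(R^*)v\|_p,
\]
so the limit as $t\downarrow0$ exists and equals $\|J_f(R^*)v\|_p$. As $v\in B^*$ was arbitrary, $\operatorname{Slope}_f(x)$ is well defined.

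\textbf{Step 3: the bound, and where the work is.} Combining the displayed identity with $\operatorname{Slope}_f(R)=\|J_f(R)\|_p$ and $\|v\|_p=1$,
\[
\lim_{t\downarrow0}\frac{\|f(x+tv)-f(x)\|_p}{t}=\|J_f(R^*(v))v\|_p\le\|J_f(R^*(v))\|_p=\operatorname{Slope}_f\bigl(R^*(v)\bigr)\le\max_{R\in{\mathcal R}(x)}\operatorname{Slope}_f(R),
\]
and taking the supremum over $v\in B^*$ gives $\operatorname{Slope}_f(x)\le\max_{R:\,x\in\overline R}\operatorname{Slope}_f(R)$. The only genuine obstacle is the combinatorial claim behind Steps 1--2 — that along every ray from $x$ the function $f$ is, on a short initial segment, governed by a single affine piece whose region has $x$ in its closure — and this is exactly where convexity and finiteness of the activation regions are used. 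If one preferred not to invoke \cite{DBLP:conf/nips/HaninR19}, one would first establish, by induction on the layers, that a ReLU network is piecewise affine with finitely many convex polyhedral pieces; given that, everything above is immediate.
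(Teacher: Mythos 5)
Your proof is correct, and its core coincides with the paper's: once you know that for every direction $v$ there is a region $R^*$ with $x\in\overline{R^*}$ whose closure contains an initial segment $\{x+tv:0\le t\le t_1\}$, the difference quotient is identically $\|J_f(R^*)v\|_p$, the limit exists, and the bound follows by taking the supremum over $v$ — this is verbatim the paper's proof of the proposition. Where you differ is in how that ray lemma (the paper's Lemma~1) is obtained. The paper proves it from inside the network: it splits the neurons at $x$ into active, inactive and edge sets and argues, via the hyperplanes attached to the edge neurons and continuity, that a small move along $v$ can only flip edge neurons, so the perturbed pattern refines to a region whose closure contains both $x$ and the segment. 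You argue from the region decomposition itself: finitely many convex regions whose closures cover ${\mathbb R}^{n_0}$, the closed set $\bigcup_{R\notin{\mathcal R}(x)}\overline R$ misses $x$, each $\{t\ge 0: x+tv\in\overline R\}$ is a closed interval, and a pigeonhole argument on a sequence $t_n\downarrow 0$ produces $R^*$. Your route is cleaner and sidesteps the paper's somewhat delicate normal-vector reasoning, but it buys this by importing the structural facts — finiteness, convexity, and especially that the closures of the activation regions cover the input space — from the literature or a sketched induction, which is essentially the content the paper's Lemma~1 establishes by hand; note also that convexity (and the covering property) is automatic only for the pattern-based notion of activation region, not for the paper's stated definition as maximal open connected sets on which $f$ is affine, a gap both you and the paper's own proofs quietly pass over.
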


For ReLU networks it is easy to find the Jacobian. In fact,  
\begin{equation}
J_f(x) = W_nZ_{f^{n-1}(x)}W_{n-1}\cdots Z_{f^2(x)}W_2
Z_{f^1(x)}W_1
\label{eq:prod}
\end{equation}
where $Z_{f^i(x)}$ is a diagonal matrix having 0's and 1's along its diagonal, depending on the value of $f^i(x)$. 
If $Z_{f^i(x)}=0$ for some $i$ then $J_f(x)=0$. Let us assume below that this is not the case; we then get $||Z_{f^i(x)}||_p=1$. 
In a similar way as in \cite{yoshida2017spectral} we can take the p-norm of Equation~\ref{eq:prod} to obtain 
\begin{eqnarray*}
||J_f(x)||_p &\leq& ||W_n||_p\cdot ||Z_{f^{n-1}}(x)||_p
\cdots ||W_2||_p\cdot ||Z_{f^1(x)}||_p\cdot ||W_1||_p\\
&=& ||W_n||_p\cdots ||W_2||_p\cdot ||W_1||_p
\end{eqnarray*}

Further\footnote{This follows as the spectral norm is equal to the largest singular value of $W_i$, while the Frobenius norm is equal to the square root of the sum of the squares of the singular values of $W_i$.}, we have $||W_i||_2 \leq ||W_i||_F$ where $||W_i||_F$ denotes the Frobenius norm of $W_i$, i.e., the square root of the sum of the squares of the entries of $W_i$. 
This shows that limiting the size of the entries of the $W_i$ matrices implies limiting the $||W_i||_2$ values. In turn, we have shown above that limiting $||W_i||_p$ for any $1\leq p \leq \infty$ implies limiting 
$||J_f(x)||_p$, i.e., the slope.

However, the reverse is not the case. In fact, our conjecture is that controlling the slope $||J_f(x)||_p$ is a much more fine-tuned and precise way of controlling the geometry of $f$ than standard regularization. 

\begin{proposition}
Proposition~\ref{prop:distance} holds also when $f$ is a ReLU network. 
\label{prop:distance2}
\end{proposition}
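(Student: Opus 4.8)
The plan is to exploit the piecewise-affine structure of ReLU networks and reduce the bound to a telescoping estimate along the straight segment joining $x$ and $y$. First I would record the structural fact that a ReLU network with a fixed architecture has only finitely many activation regions $R_1,\dots,R_N$, and that each closure $\overline{R_i}$ is a convex polyhedron: the first-layer activation pattern cuts $\mathbb{R}^{n_0}$ into intersections of half-spaces, and inductively, conditioning on the activation pattern of the first $i-1$ layers makes $f^{i-1}$ affine on the corresponding region, so each layer-$i$ neuron contributes one further (affine) half-space constraint. The closures $\overline{R_i}$ cover $\mathbb{R}^{n_0}$, and on each of them $f$ coincides with the affine map $z\mapsto J_f(R_i)z+b_{R_i}$: this holds on the open region $R_i$ by definition of the activation region, and extends to $\overline{R_i}$ because $f$ is continuous.

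Next I would parametrize the segment by $\gamma(t)=x+t(y-x)$ for $t\in[0,1]$. For each $i$ the set $\gamma^{-1}(\overline{R_i})$ is the intersection of $[0,1]$ with the preimage of a convex set under an affine map, hence a closed subinterval of $[0,1]$; since the $\overline{R_i}$ cover $\mathbb{R}^{n_0}$, these finitely many subintervals cover $[0,1]$. From their endpoints I would extract a partition $0=t_0<t_1<\dots<t_m=1$ such that each $[t_{j-1},t_j]$ lies inside some $\gamma^{-1}(\overline{R_{i(j)}})$. On $[t_{j-1},t_j]$ the map $t\mapsto f(\gamma(t))$ is affine with linear part $J_f(R_{i(j)})(y-x)$, so
$$\|f(\gamma(t_j))-f(\gamma(t_{j-1}))\|_p=\|J_f(R_{i(j)})(\gamma(t_j)-\gamma(t_{j-1}))\|_p\le\|J_f(R_{i(j)})\|_p\,(t_j-t_{j-1})\,\|y-x\|_p.$$
By Proposition~\ref{prop:slope_1}, $\|J_f(R_{i(j)})\|_p=\operatorname{Slope}_f(R_{i(j)})=\operatorname{Slope}_f(z)$ for any $z\in R_{i(j)}$, which is at most $K$ by hypothesis. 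Telescoping and applying the triangle inequality, $f(y)-f(x)=\sum_{j=1}^m\big(f(\gamma(t_j))-f(\gamma(t_{j-1}))\big)$, so $\|f(y)-f(x)\|_p\le\sum_j K(t_j-t_{j-1})\|y-x\|_p=K\|y-x\|_p$ since the increments sum to $1$.

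The only genuinely delicate point is the finiteness/polyhedrality claim together with the resulting finite decomposition of the segment; once that is in hand the rest is a one-line argument. I would add a sentence on the degenerate case $J_f(R_i)=0$ (which occurs precisely when some $Z_{f^i(x)}$ vanishes in Equation~\ref{eq:prod}): there $\operatorname{Slope}_f(R_i)=0\le K$, so the estimate is trivially valid and needs no separate treatment. I would also note the mild subtlety that, when the segment runs along a shared boundary of several regions over a whole subinterval, one may use any one of the adjacent regions whose closure contains that subsegment, since by continuity $f$ agrees with each of their affine representations there.
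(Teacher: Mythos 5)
Your proposal is correct and follows essentially the same route as the paper's proof: parametrize the segment from $x$ to $y$, split it into finitely many pieces on which $f$ is affine with Jacobian of norm at most $K$, and telescope with the triangle inequality. The only difference is that you justify the finite piecewise-affine decomposition of the segment (finitely many regions, convexity of their closures, interval preimages), which the paper simply asserts.
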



Propositions~\ref{prop:distance} and \ref{prop:distance2} indicate how the slope directly connects classification probabilities in the output space with distances in the input space. If we somehow increase the distances in the input space with with a factor $c$, we might expect the slopes of similarly well-trained classifiers to decrease with the same factor $c$. 

In fact, we will investigate this effect in the case of image resolutions. Assume the resolution of the images in an image classification dataset is changed using some algorithm. For example, images in standard datasets with $28\times28$ resolution might be rescaled to a $56\times56$ resolution, multiplying the total number of dimensions by 4. The exact change in the Euclidean distances between images will depend on the rescaling algorithm used, but as a rough estimate we may assume that the change is the same as the change of distances between independent points with a standard normal distribution when the dimension is multiplied by 4. Using Lemma~\ref{lemma2} in the Appendix we get that Euclidean distances between such points are doubled. We will compare this with empirical observations in Section \ref{sec:decreasing_img}.

\section{Slopes and learning}

Let us start with some theory: 
\begin{proposition}
Assume we have a ReLU network $f(x)=(f_1(x),\dots,f_n(x))$ followed by a softmax classifier into $n$ categories.  If the network classifies a data point $x$ correctly, the term in the loss corresponding to $x$ will decrease if $f$ is replaced by $cf$ where $c>1$ is a constant.
\end{proposition}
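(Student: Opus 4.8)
The plan is to work directly with the per-example cross-entropy (negative log-likelihood) loss, which is the loss naturally attached to a softmax classifier. Writing $y$ for the true class of $x$, the term in the loss corresponding to $x$ is $\ell(f)=-\log\operatorname{softmax}_y(f(x))=-f_y(x)+\log\sum_{j=1}^{n}e^{f_j(x)}$. Replacing $f$ by $cf$ turns this into $\ell(c):=-c\,f_y(x)+\log\sum_{j=1}^{n}e^{c\,f_j(x)}$, and the claim is precisely that $\ell(c)<\ell(1)$ whenever $c>1$, under the hypothesis that $x$ is classified correctly, read as the strict inequality $f_y(x)>f_j(x)$ for every $j\neq y$.

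First I would rewrite $\ell(c)$ so that the dependence on $c$ is transparent: factoring $e^{-c f_y(x)}$ inside the logarithm gives
$$
\ell(c)=\log\!\left(\sum_{j=1}^{n}e^{c\,(f_j(x)-f_y(x))}\right)=\log\!\left(1+\sum_{j\neq y}e^{c\,(f_j(x)-f_y(x))}\right).
$$
The correct-classification hypothesis says exactly that every exponent $f_j(x)-f_y(x)$ with $j\neq y$ is strictly negative. Hence each function $c\mapsto e^{c\,(f_j(x)-f_y(x))}$ is strictly decreasing on $(0,\infty)$, so the sum inside the logarithm is strictly decreasing in $c$; since $\log$ is increasing, $\ell$ is strictly decreasing on $(0,\infty)$. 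In particular $\ell(c)<\ell(1)$ for all $c>1$, which is the assertion.

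If a differentiable argument is preferred, differentiating gives $\ell'(c)=-f_y(x)+\sum_{j}p_j(c)\,f_j(x)$, where $p(c)=\operatorname{softmax}(c\,f(x))$ is a probability vector with all entries strictly positive; since $f_y(x)=\max_j f_j(x)$ and not all $f_j(x)$ coincide, $\sum_j p_j(c)f_j(x)<f_y(x)$, so $\ell'(c)<0$ for every $c>0$ and the same conclusion follows. I do not expect a genuine obstacle: the only point needing care is that ``classifies correctly'' must be taken to mean a strict inequality at the top of the logit vector — otherwise, with a tie for the maximum, $\ell$ is merely nonincreasing. It is also worth remarking explicitly that the result uses no property of neural networks beyond the shape of the softmax–cross-entropy loss; the ReLU structure plays no role here.
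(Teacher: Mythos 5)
Your proof is correct and follows essentially the same route as the paper: rewrite the per-example cross-entropy as $\log\bigl(1+\sum_{j\neq y}e^{c\,(f_j(x)-f_y(x))}\bigr)$ and use that correct classification makes each exponent strictly negative, so scaling by $c>1$ strictly decreases every term and hence the loss. If anything, your termwise monotonicity (and the derivative check) is a tighter rendering of the paper's slightly loose step of summing the inequalities $c(f_j(x)-f_i(x))<f_j(x)-f_i(x)$ over $j$ and calling the result ``equivalent'' to the loss inequality; your remark that strictness of the maximum is what is really needed is also apt.
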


\begin{proposition}
For any ReLU network there exists at least one vector $v$ in the parameter space such that the gradient in the direction of $v$ corresponds to multiplying the network map $f$ with a constant $c$. 
\label{prop:gradient_and_slope}
\end{proposition}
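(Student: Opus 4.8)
The plan is to obtain the direction $v$ by differentiating an explicit rescaling curve in parameter space. Write $\theta=(W_1,b_1,\dots,W_n,b_n)$ for the network's parameters and $f_\theta$ for the map it represents. I want a smooth curve $\theta(s)$ with $\theta(0)=\theta$ along which $f_{\theta(s)}=c(s)\,f$ for some scalar function $c$ with $c(0)=1$; then $v:=\theta'(0)$ has the property that the derivative of $s\mapsto f_{\theta(s)}$ at $s=0$ equals $c'(0)\,f$, i.e. moving $\theta$ along this curve multiplies the network map by the scalar $c(s)=1+c'(0)s+o(s)$, which is exactly the assertion. The simplest such curve touches only the final, affine layer: since $f^{n-1}$ depends only on $W_1,b_1,\dots,W_{n-1},b_{n-1}$, replacing $(W_n,b_n)$ by $((1+s)W_n,(1+s)b_n)$ gives $f_{\theta(s)}(x)=(1+s)W_nf^{n-1}(x)+(1+s)b_n=(1+s)\,f(x)$ exactly, a curve through $\theta$ at $s=0$ with velocity $v=(0,\dots,0,W_n,b_n)$ and $\tfrac{d}{ds}\big|_{s=0}f_{\theta(s)}=f$. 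So this $v$ works, with $c(s)=1+s$; it even accommodates negative constants, and the only degenerate case $(W_n,b_n)=0$ forces $f\equiv 0$ and makes the statement trivial.

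A second, more symmetric witness exploits the positive homogeneity of ReLU, $g(\alpha z)=\alpha g(z)$ for every $\alpha\geq 0$ (applied coordinatewise). For $c>0$ define the rescaled parameters $W_i'=c^{1/n}W_i$ and $b_i'=c^{i/n}b_i$. I would show by induction on $i$ that the intermediate maps $\tilde{f}^{\,i}$ of the rescaled network satisfy $\tilde{f}^{\,i}(x)=c^{i/n}f^i(x)$ for $1\le i\le n-1$: the base case is $\tilde{f}^{\,1}(x)=g(c^{1/n}W_1x+c^{1/n}b_1)=c^{1/n}g(W_1x+b_1)$, and in the inductive step the factor $c^{(i-1)/n}$ coming from $\tilde{f}^{\,i-1}$ and the factor $c^{1/n}$ from $W_i'$ combine with $b_i'=c^{i/n}b_i$ so that $c^{i/n}$ can be pulled out through $g$. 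At the affine layer $\tilde{f}(x)=c^{1/n}W_n\cdot c^{(n-1)/n}f^{n-1}(x)+c\,b_n=c\,f(x)$. Setting $c=c(s)=e^{s}$ yields a smooth curve whose velocity $v$ has $W_i$-block $\tfrac{1}{n}W_i$ and $b_i$-block $\tfrac{i}{n}b_i$, nonzero unless every weight and bias vanishes. This is the natural witness for the surrounding discussion, since it scales every $W_i$ and hence, via $||J_f(x)||_p\le\prod_i||W_i||_p$, scales the slope itself.

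I do not expect a genuine obstacle here; the only points to handle carefully are: (i) the phrase ``the gradient in the direction of $v$ corresponds to multiplying $f$ by a constant'' should be read as ``the derivative of $s\mapsto f_{\theta(s)}$ at $s=0$ is a scalar multiple of $f$'' — and along a scaling curve $f_{\theta(s)}(x)=c(s)f(x)$ is, for each fixed $x$, literally $c(s)$ times a constant, so this derivative plainly exists even though $\theta\mapsto f_\theta$ need not be differentiable at every $\theta$; (ii) ReLU homogeneity holds only for nonnegative scalars, so the second construction requires $c>0$ (consistent with the $c>1$ of the preceding proposition); and (iii) in that construction the biases must be rescaled layer by layer by $c^{i/n}$, not by a single common factor.
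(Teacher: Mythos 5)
Your proposal is correct and follows essentially the same route as the paper: the paper's proof also exhibits an explicit parameter rescaling that multiplies $f$ by $c$ (scaling $W_i,b_i$ at one layer and the biases of all later layers, using ReLU positive homogeneity), and your first witness is exactly that construction specialized to the final affine layer $i=n$. Your second, layer-distributed scaling and the careful curve-derivative reading of ``gradient in the direction of $v$'' are harmless refinements of the same idea.
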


Finally, notice that for any network map $f$ and $c>0$ we have $\operatorname{Slope}_{cf}(x)=c\operatorname{Slope}_f(x)$. 

Now, assume we are learning the parameters of a particular network, and have reached a "good model". Then, generally, most points will be correctly classified. It is then reasonable to expect that changes to the parameters along a vector like those described in the previous 
proposition will on average lead to a decline in the loss. Of course, there will often be many vectors along which the loss declines. 
However, if the training is continued for an unlimited number of epochs, the directions described in the previous theorem may become dominant. Thus, the training will lead to larger and larger slopes. 


It is a well-studied issue with the types of neural networks we are studying here that training tends to lead to larger and larger parameters. Two important methods to control this effect are regularization and batch normalization. Regularization may attempt to limit the growth of the 
values in the $W_i$ and $b_i$ parameters. Batch normalization re-centers and re-scales data values between layers. We saw in the previous section how regularization also controls the 2-slope. A similar argument can be made regarding batch normalization. 
However, we conjecture that controlling the geometric complexity of $f$ using measures defined 
in terms of $f$ (such as slope) should yield better and more precise results than using measures 
that depend on the particular neural network representation of $f$, such as standard regularization and batch normalization.

\subsection{Finding a well trained model}
\label{sec:optimal_model}
Naïve attempts to build a classifier $f$ may try to predict all points in the training data as well as possible, i.e., one may focus only on minimizing the loss as much as possible. This will lead to functions $f$ whose complexity tend to increase without bound as the amount of data increases. As discussed in the previous section 
this is connected to an ever-increasing slope for $f$. It is also a recipe for overfitting. 

In this paper we take the Bayesian viewpoint that the information content in the training data is not big enough to build a model that classifies 
perfectly on the training set and optimally on validation and test sets. Instead, one should aim for functions $f$ that weigh loss minimization against $f$ being 
"reasonable", in some sense, as a classifier. Successful classification algorithms avoid overfitting in a multitude of ways: By restricting the set of allowed 
functions $f$, by regularization that prioritizes "reasonable" $f$, by using network architectures that lead to gradients pointing toward "reasonable" $f$, 
by various stochastic mechanisms introducing noise, and by stopping the training process based on carefully chosen criteria. 

In this paper, we select, for each of a set of datasets and models that have been seen to produce fitted models with good classification accuracy
on test sets. For each dataset, we train these models a multitude of times, producing a sequence of classification functions $f_1,\dots,f_k$. 
Following the language of the paragraph above, these functions will have been produced by limiting the "unreasonableness" of the classification function in slightly different ways. 
However, our hypothesis is that these ways are sufficiently similar, and related to the specific geometric property we are studying, that we can also detect that the functions $f_1,\dots,f_k$ have similar slope properties.

This empirical investigation is performed by investigating the slope (measured with $||\cdot||_2$ for convenience) of networks trained on MNIST, KMNIST and FashionMNIST. Additionally, in order to include a non-image data set, we investigate the slope of networks trained on the Forest Cover data set. Due to time computational constraints, we do not work with the full Forest Cover data set but instead work with a random subset of 10000 data points, which are further split into 8000 training and 2000 validation points.

For each of these models we run Stochastic Gradient Descent (SGD) with a momentum of 0.8, batch size of 64 and learning rate of 0.001 for 150 epochs and the optimal model during training is chosen as the one obtained at the epoch where the validation loss was the lowest. This setup and hyperparameters are chosen so that the training proceeds long enough to give an accurate picture of the evolution of the slope, while ensuring that all models can be trained to yield accurate classifications. Unless it is otherwise mentioned, in all subsequent experiments we summarize the slope of $f$ into a single number by computing the average slope over 750 training data points chosen at random. All error-bars are obtained as the standard deviation of the slope over 5 separate runs.

\section{Results}

\subsection{Consistently increasing slopes}

A first observation is that the slope is monotonously increasing during the larger extent of the training period. This can be seen in Figure \ref{fig:increasing_slope}
where the evolution of the slope during training is shown for a variety of fully connected and convolutional networks. Each curve represents the evolution of the slope for one unique model (the exact info of the considered models can be found in Appendix \ref{sec:exp_details}). The convolutional networks are trained on MNIST, FashionMNIST, KMNIST while the fully connected network is additionally trained on the Forest Cover data set.

This continuous increase of the geometric complexity indicates that similar gradient directions to that of Proposition \ref{prop:gradient_and_slope} control the majority of the training evolution. 
It can also be seen that convolutional networks generally seem to reach higher slopes earlier than fully connected networks. 
An explanation may be that as the convolutional networks are adapted to the image analysis problem at hand, the training goes faster, i.e., takes fewer epochs, than for fully connected networks. 

The behaviour of an increasing slope is also visible in a different form in Figure \ref{fig:increasing_slope_hist} where the slope distribution at initialization and for the optimal models obtained for the Forest Cover data set can be seen. From these results it is clear that the slope of a well performing model is generally higher than that at initialization. Here it can also be seen that the distribution of the slope for the optimal model is roughly invariant to the width of the layers in the network, seen by the alignment of the estimated distributions as the layer width changes. While the distribution of the slope changes as more hidden layers are added, the mean of the distributions is relatively invariant to the model architecture.


\begin{figure}[h]
        \centering
        \includegraphics[width=\linewidth]{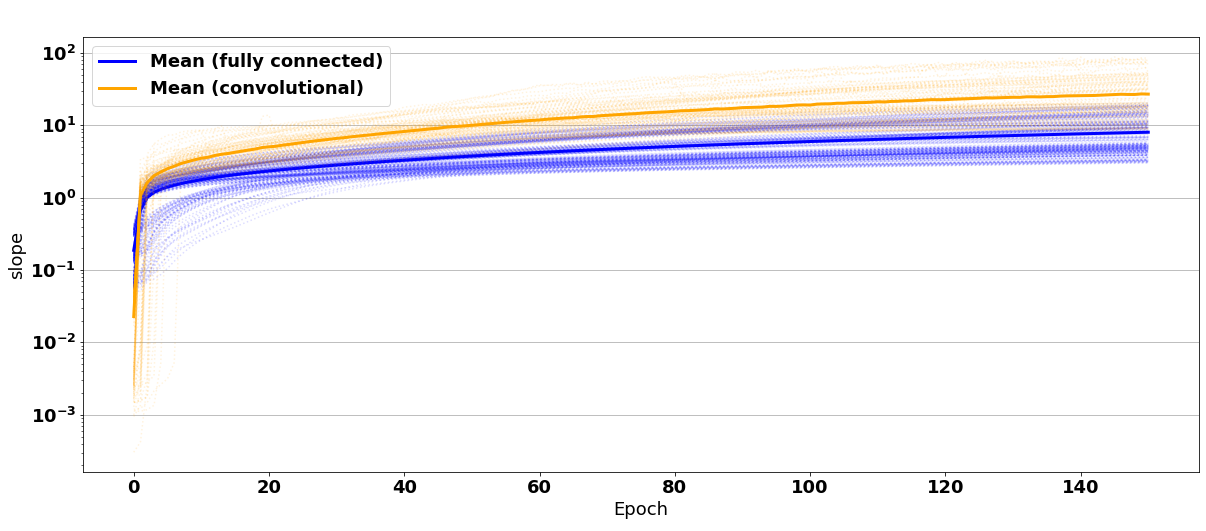}
        \caption{Increasing slope during training. Each curve represents the evolution of the slope when training one unique network for 150 epochs on either MNIST, FashionMNIST, KMNIST or the Forest Cover data set.
        The orange lines detail the evolution for convolutional networks while the blue detail it for fully connected networks. The convolutional networks are only trained on MNIST, FashionMNIST and KMNIST.}
        \label{fig:increasing_slope}
\end{figure}

\begin{figure}[h]
        \centering
        \includegraphics[width=\linewidth]{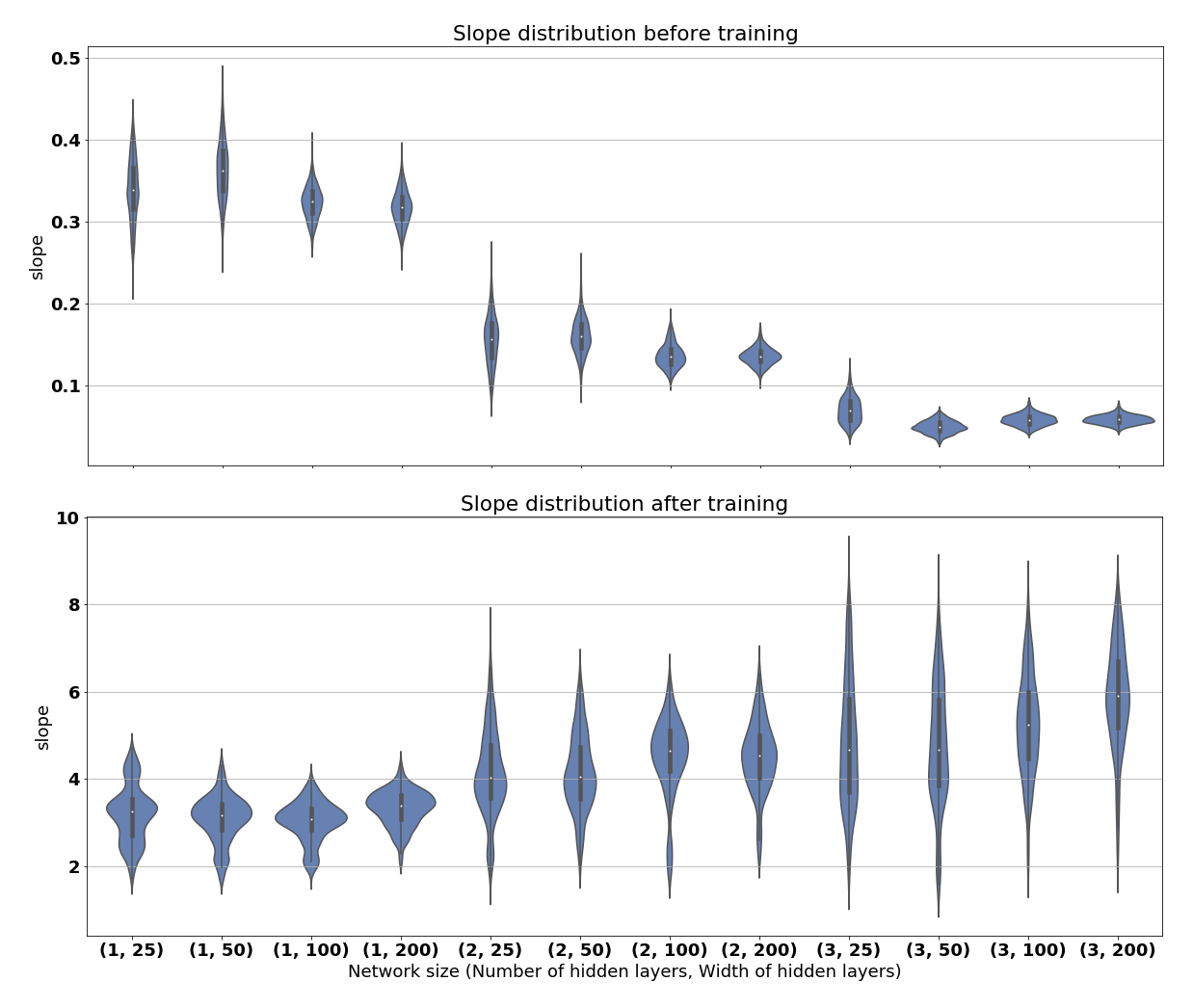}
        \caption{Violinplot of how the slope changes from initialization to the optimal model for the Forest Cover data set. Each estimated distribution is created by measuring the slope on 750 random training points. Similar plots are obtained for the other considered data sets and convolutional networks.}
        \label{fig:increasing_slope_hist}
\end{figure}

\subsection{Slopes are consistent across different SGD simulations and different network architectures}

In order to ensure that the slope contains information of the underlying geometry and that different runs of SGD produce functions $f$ with consistent slopes, we perform several repeated runs with different random seeds and measure the average slope and variance for the optimal model. 

The effect on the distributions for the optimal models when using different random seeds can be seen in Figure \ref{fig:slope_SGD}. It can be seen that the random seed has a minor effect on the shape of the estimated distribution for KMNIST, while for MNIST there are some minor discrepancies in the alignment of the distributions. This minor discrepancy can be expected given the stochastic nature of the SGD algorithm and that piecewise linear functions can locally change slope quickly without having a major effect on the overall behaviour of the function. While not shown, the effect of the random seed on the slope distribution for the optimal models trained on FashionMNIST and Forest Cover are similar to that of the effect on KMNIST.

\begin{figure}[h]
        \centering
        \includegraphics[width=\linewidth]{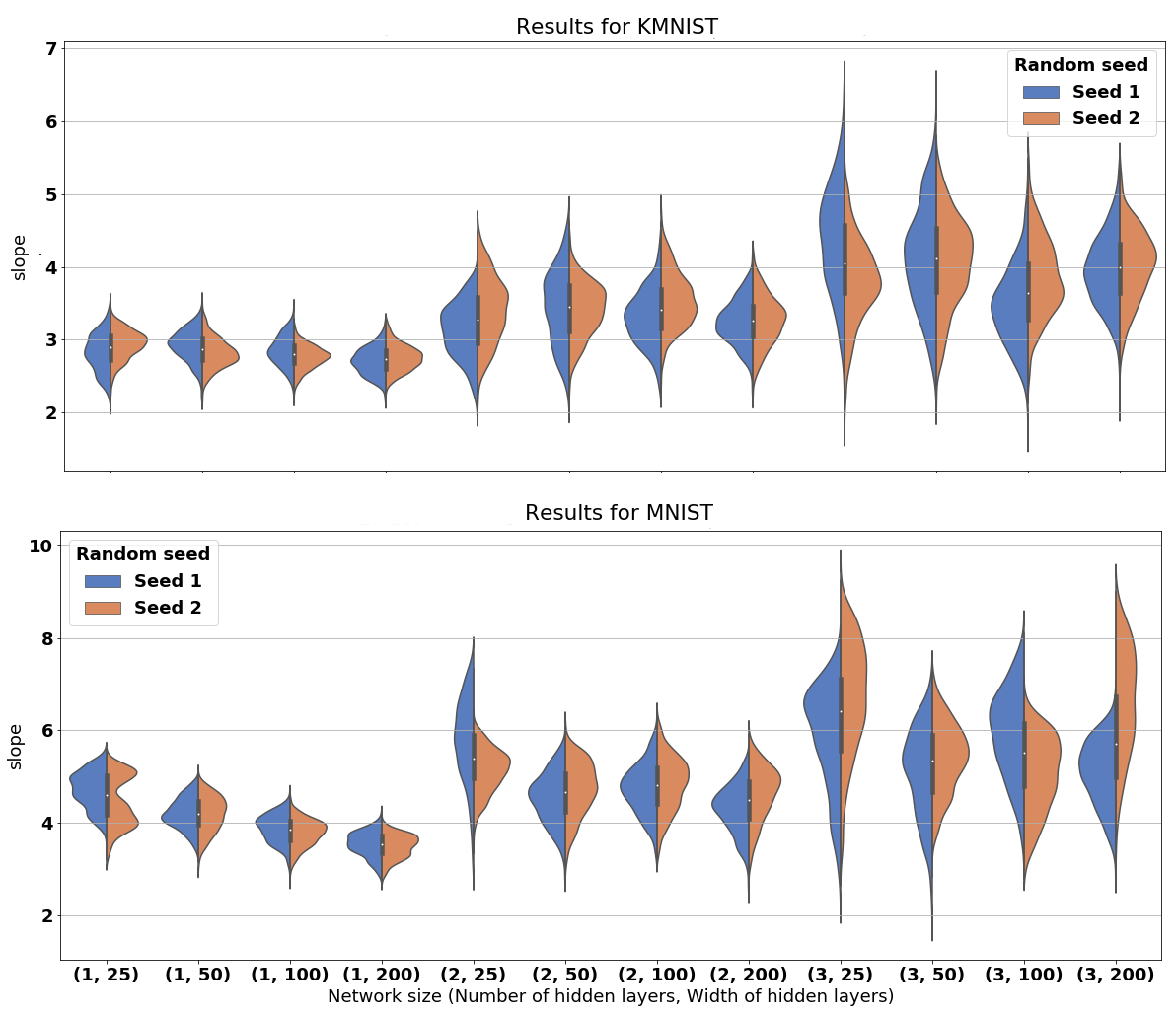}
        \caption{The estimated slope distributions for KMNIST (top) and MNIST (bottom) for two different random seeds are shown to measure the stability of the distribution of the slope to variations of the seed used for SGD.}
        \label{fig:slope_SGD}
\end{figure}

The results when only considering the effect of the seed on the mean of the distributions can be seen in Figure \ref{fig_slope_across_models_and_data} where the network structure is varied and the mean slope for the optimal model is recorded. These results show the stability of the slope in spite of random fluctuations in the learning algorithm, but they also indicate that for some data sets there might be a small range of slopes where the model will perform well, and that this range is almost independent of the network structure. 

There are however intriguing differences in slopes between fully connected and convolutional networks. The slope for the FashionMNIST data set seems to be of larger magnitude for the convolutional models while for MNIST and opposite effect can be observed.

Generally, as we observed in Figure~\ref{fig:increasing_slope}, training of convolutional networks uses fewer iterations to reach functions $f$ with higher slopes. Depending on the specifics of the dataset, this may mean that the "well fitted model" as defined in our computation is reached at an $f$ with a higher or lower slope compared to the fully connected case. Further investigation of this effect is needed. 
\begin{figure}[h]
        \centering
        \includegraphics[width=\linewidth]{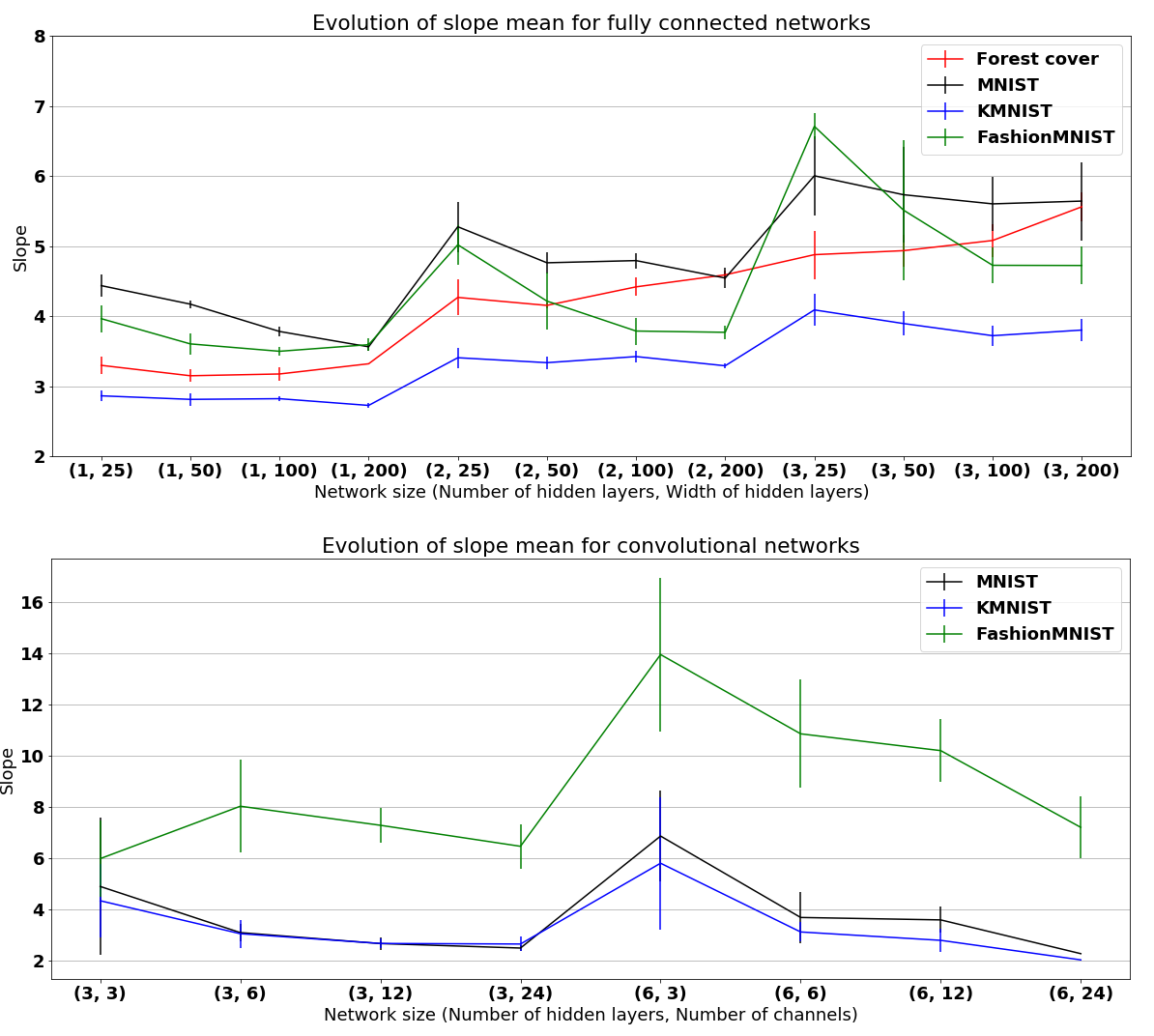}
        \caption{Experiment to measure the stability of the slope mean to variations in the structure for fully connected (top) and convolutional networks (bottom). Each curve represents how the slope mean varies as the model architecture is changed.}
        \label{fig_slope_across_models_and_data}
\end{figure}

\subsection{Slopes decrease with increasing image resolution}
\label{sec:decreasing_img}
In order to understand the relation between the slope and distance between input points,
we devise an experiment where we use bi-linear interpolation to increase the resolution of images in KMNIST, MNIST and FashionMNIST 
and investigate how the slope varies for the optimal classifiers. This setup moves input points further away from each other while it can be simultaneously argued that complexity of the classification task is preserved.
The results of the experiment when increasing the image resolution from 28x28 to 84x84 can be seen below in Figure \ref{fig:slope_image_size}. 

While the decrease is relatively linear for all three data sets, it can be seen that the decrease in slope deviates from the ideal hypothesized factor of 2 from the argument following Proposition \ref{prop:distance2}. This deviation is likely to stem from that the assumed normality required for Lemma~\ref{lemma2} does not fully capture how the true distances between input points vary, but instead only provides a rough approximation.
\begin{figure}[h]
        \centering
        \includegraphics[width=\linewidth]{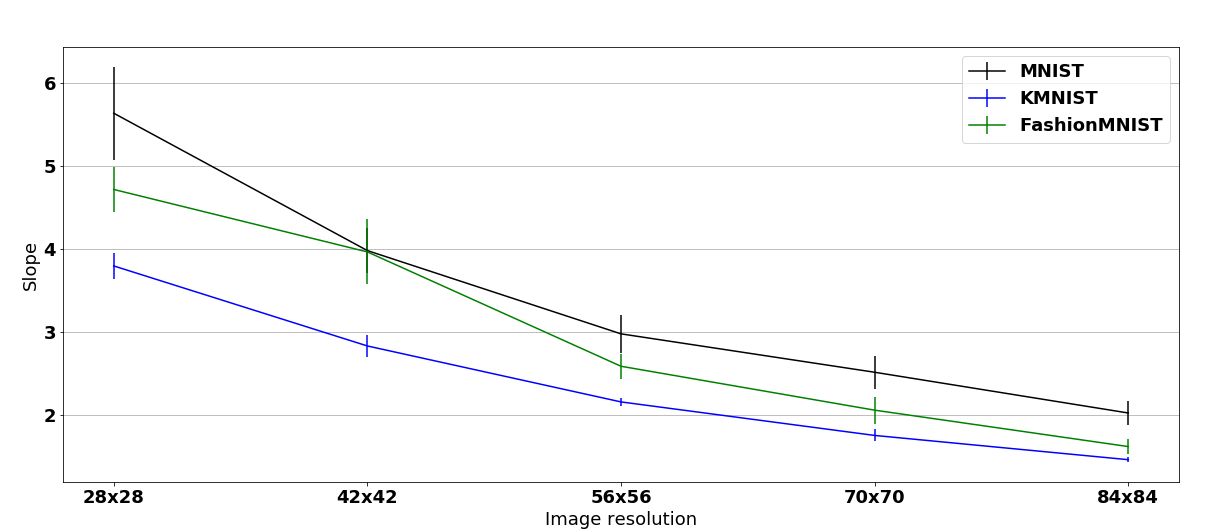}
        \caption{Relation between the slope and the distance between input points. The image resolution is increased through bi-linear interpolation.}
        \label{fig:slope_image_size}
\end{figure}

\subsection{Local variation in slope}

While the above experiments mainly consider the global properties associated with the slope, it can also be of interest to see how the slope varies locally. This is explored below in Figure \ref{fig:local_slope} where we choose 250 randomly chosen training points in FashionMNIST and sample points on concentric spheres with increasing radii and measure the relative difference between the slope for the sampled points on the spheres and the slope of the training point at the center of the sphere. The variation of the slope on each radii is summarized by sampling 500 points on each sphere and computing the relative difference to the slope at the center of the sphere. For ease of displaying the results, this is only performed for a fully connected network with 3 hidden layers, each of width 200 and the procedure is performed for the optimal network parameters, but similar results do hold for other network architectures. In the figure it can be seen that the relative slope difference is small and increasing for all radii and training points. The discontinuous nature of the slope for ReLU networks is not immediately visible but instead the slope exhibits a smooth and almost continuous change as the distance is increased.

\begin{figure}[h]
        \centering
        \includegraphics[width=\linewidth]{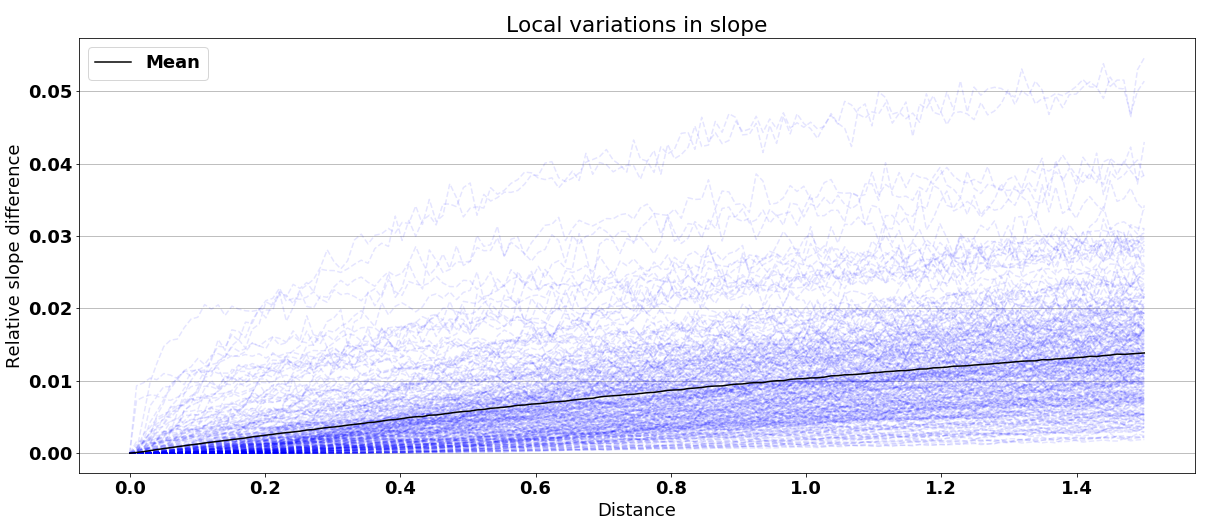}
        \caption{Experiment to measure the stability of the slope to local perturbations. Each curve is created by choosing a random training point on FashionMNIST and sampling points on concentric spheres with increasing radii around that point. The mean of the absolute slope difference between the sampled points and the training point is recorded for each radii. On each circle 250 points are sampled to estimate the difference in slope. Similar results are obtained for the other data sets.}
        \label{fig:local_slope}
\end{figure}

\section{How can the slope be used?}

We have established that well fitted classification neural networks (i.e. having a good accuracy and
limited overfitting) for a given dataset have in common similar slope properties. Thus these can be
associated with the geometry of a good classifier for the problem. A number of uses of this can be imagined.

One possibility is to view slope as a property that should be limited. In other words, any increase in the slope should be weighed against a decrease in the loss. From a Bayesian point of view one may imagine using a prior for $f$ computed from its slope. Equivalently, one might use a regularization term in the optimization computed from the slope. We aim to explore this possibility in a later paper. The explanation that good classifiers for a particular problem have similar slopes would be that this slope represents a good compromise between loss minimization and generalizability. 

Another possibility is to directly target a particular slope interval for a particular classification problem. One might first use large neural networks to establish a reasonable slope for a good classifier for a problem. Then one might use this target slope when optimizing other types of networks, for example smaller networks, or networks with particular properties such as robustness. 

A third possibility is that the network slope of a fitted network can be used as a measure of the "complexity" of a classification problem. Note that this measure would generally be different from measures based on the accuracy obtained using a particular neural network.     
 
\section{Discussion}

We have established the concept of slope for a neural network, together with some attractive theoretical properties. In examples, we have shown that the slope often does not vary much throughout the input space, that it varies smoothly, and that it tends to increase during training. In examples, we have shown that the distribution of the slope for well trained fully connected networks is almost invariant to the width of the hidden layers. Some characteristics of the distribution seems to be dependant on the number of hidden layers, but the mean of the distribution is fairly constant to these changes. There additionally seems to be some dependence on whether a fully connected or convolutional architecture is used, and further study is needed to determine the cause of this. 

Our examples have been using quite simple classification problems, such as MNIST and FashionMNIST, together with quite small neural networks. Further study is needed to determine how general our conclusions are when increasing the size of the problem and the networks. 

Additionally, while the theory holds for general values of $p$, all empirical results only consider the slope for $p=2$. While the results are conjectured to generalize to other values of $p$, this should be verified in future studies.

Nonetheless, the general idea that the generalization properties of a neural network function $f$ is determined by its geometrical properties seems supported, and the slope seems to be an example of such a geometric property. This opens up a number of interesting usages, both theoretical in connection with understanding neural network generalization properties, and practical, in terms of controlling the slopes of networks.

\bibliographystyle{plain}
\bibliography{bibliography.bib}
\clearpage

\appendix
\section{Appendix}
\subsection{Proofs}

\begin{proof} (Proof of proposition 1) 
Assuming that the Jacobian $J_f(x)$ exists we have
$$\lim_{t\downarrow0}\frac{||f(x+tv)-f(x)||_p}{t} = \lim_{t\downarrow0}||J_f(x)v + o(1)||_p = ||J_f(x)v||_p$$
where $||v|| = 1$. The slope is thus reduced to
$$
\operatorname{Slope}_f(x) = \sup_{v \in B^*} ||J_f(x)v||_p
$$ which by definition is the p-norm of the matrix $J_f(x)$. \cite{10.5555/541313}. 
\end{proof}

\begin{proof} (Proof of Proposition 2) 
The Jacobian can be constructed using the chain rule and differentiation at every layer. The differentiations are either of linear functions or of the activation functions, thus yielding continuous results. The first part follows. The second part follows from the continuity of the matrix p-norm.

\end{proof}

\begin{proof} (Proof of proposition 3) 
Assume the Jacobian is a continuous function everywhere and let $x,y\in{\mathbb R}^{n_0}$. Defining
$g(t) = f(x+t(y-x))-f(x)$ for $t\in[0,1]$ we get 
$$
f(y)-f(x) = \int_0^1g'(t)\,dt = \left(\int_0^1J_f(x+t(y-x))\,dt\right)(y-x). 
$$
Note that the integrals are taken component-wise and that the right-hand side of the equation is a matrix multiplied with a vector. Applying the p-norm to this equation and using the definition of the matrix p-norm we get 
\begin{equation}
    ||f(y)-f(x)||_p\leq\left|\left|\int_0^1J_f(x+t(y-x))\,dt\right|\right|_p||y-x||_p. 
    \label{eqproof1}
\end{equation}
The p-norm of a matrix is a convex function, so by Jensen's inequality we get 
\begin{equation}
    \left|\left|\int_0^1J_f(x+t(y-x))\,dt\right|\right|_p\leq\int_0^1||J_f(x+t(y-x))||_p\,dt.
    \label{eqproof2}
\end{equation}
If the Jacobian exists everywhere then $\operatorname{Slope}_f(x+t(y-x))\leq K$ implies
$||J_f(x+f(y-x))||_p\leq K$. Putting this together with Equations~\ref{eqproof1} and \ref{eqproof2} we get the desired result. 

\end{proof}

To prove Proposition 4, we first prove the following Lemma: 

\begin{lemma}
Let $f$ be a ReLU network. For any $x\in{\mathbb R}^{n_0}$ and $v\in B^*$ there exists a region $R_{x,v}$ and an $\epsilon_{x,v}>0$ such that 
$x+vt\in \overline{R_{x,v}}$ for all $t\in[0,\epsilon_{x,v}]$. 
\label{lemma1}
\end{lemma}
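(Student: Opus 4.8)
The plan is to show that a line segment emanating from any point $x$ in direction $v$ must, for sufficiently small parameter $t$, stay within the closure of a single activation region. The key fact about ReLU networks is that $\mathbb{R}^{n_0}$ is partitioned into finitely many activation regions, each of which is an open convex polytope (being defined by a finite conjunction of affine inequalities on the preactivations, layer by layer), together with their boundary pieces, which lie on finitely many hyperplanes. So I first set up this picture: let $\mathcal{H}$ be the (finite) collection of all hyperplanes arising as boundaries of activation regions.

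First I would consider the ray $\gamma(t) = x + tv$ for $t \geq 0$. For each hyperplane $H \in \mathcal{H}$, the set $\{t \geq 0 : \gamma(t) \in H\}$ is either all of $[0,\infty)$ (if the ray lies in $H$), a single point, or empty. In the cases where it is a single point $t_H > 0$, collect these finitely many positive values; if there are none, set $\epsilon_{x,v}$ to be anything positive, say $1$, otherwise let $\epsilon_{x,v} = \min_H t_H > 0$. Then on the open interval $t \in (0, \epsilon_{x,v})$, the point $\gamma(t)$ does not cross any hyperplane of $\mathcal{H}$ other than possibly those containing the whole ray, so the activation pattern is locally constant along this portion of the ray — it lies in (the closure of) a fixed region. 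More carefully: pick any $t_0 \in (0,\epsilon_{x,v})$; the point $\gamma(t_0)$ lies in $\overline{R}$ for some activation region $R$, and because no hyperplane separates $\gamma(t_0)$ from $\gamma(t)$ for $t \in [0,\epsilon_{x,v}]$ in a way that would exit $\overline{R}$ — here I use convexity of $\overline{R}$ together with the fact that $\overline{R}$ is exactly the intersection of the relevant closed half-spaces — the entire segment $\{\gamma(t): t\in[0,\epsilon_{x,v}]\}$ lies in $\overline{R}$. Set $R_{x,v} = R$.

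The step I expect to be the main obstacle is the bookkeeping around points that lie on the boundaries of regions, i.e. making precise why the closed segment (including the endpoint $x$ itself, which may sit on several hyperplanes) lies in $\overline{R}$ for a \emph{single} $R$. The clean way to handle this is: $\overline{R}$ is a closed convex polytope; it contains $\gamma(t_0)$; I must check it also contains $\gamma(0) = x$ and $\gamma(\epsilon_{x,v})$. Since $\overline{R}$ is cut out by finitely many closed half-space constraints $\ell_j(\cdot) \geq 0$, and each $\ell_j$ is affine, $\ell_j \circ \gamma$ is affine in $t$; it is $\geq 0$ at $t_0$, and it can only become negative by passing through a zero, which would be one of the $t_H$ we excluded from $(0,\epsilon_{x,v})$ — and at $t=0$ or $t = \epsilon_{x,v}$ the value is a limit of nonnegative values, hence nonnegative. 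So every constraint stays satisfied on all of $[0,\epsilon_{x,v}]$, giving $\gamma([0,\epsilon_{x,v}]) \subseteq \overline{R}$. I would also note the degenerate edge case where $\gamma(t_0)$ happens to land on a shared boundary for every small $t_0$ (e.g. $v$ parallel to some hyperplanes through $x$); then one simply picks any region $R$ whose closure contains that boundary segment, which exists since boundary faces are faces of regions. With $R_{x,v}$ and $\epsilon_{x,v}$ so chosen, the conclusion $x + vt \in \overline{R_{x,v}}$ for all $t \in [0,\epsilon_{x,v}]$ follows, completing the proof of the lemma.
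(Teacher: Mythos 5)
Your proof is correct, but it takes a genuinely different route from the paper's. The paper argues locally via activation patterns: it splits the neurons at $x$ into active, inactive and edge sets, uses continuity to show that for small $t>0$ the point $x+tv$ keeps all active neurons active and all inactive ones inactive (so only edge neurons get resolved), and then identifies a region whose closure contains both $x$ and the small segment. You instead argue globally with the hyperplane arrangement: since there are finitely many activation regions, each an open convex polytope, their facets lie on finitely many hyperplanes; along the affine ray $\gamma(t)=x+tv$ each such hyperplane is hit in a point, the whole ray, or not at all, so choosing $\epsilon$ below the smallest positive crossing time and writing $\overline{R}$ (for $R$ containing, or whose closure contains, $\gamma(t_0)$) as a finite intersection of closed half-spaces $\ell_j\ge 0$, each affine function $\ell_j\circ\gamma$ cannot change sign on $[0,\epsilon]$, giving the claim including both endpoints. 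What your approach buys is a cleaner treatment of exactly the delicate part of the paper's proof — the bookkeeping when $x$ (or the segment) sits on several region boundaries — replacing the paper's somewhat informal step with normal vectors and "redistributing" edge neurons by a one-line sign argument on affine constraints, and it produces an explicit $\epsilon_{x,v}$. What it costs is the extra global input: you need that regions are finitely many convex polytopes whose closures are exactly the corresponding closed polyhedra and cover $\mathbb{R}^{n_0}$, and you should be slightly careful that boundaries of deeper-layer neurons are only piecewise hyperplanes, so $\mathcal{H}$ must be taken as the (finite) set of hyperplanes supporting facets of region closures, which is how your argument in effect uses it; these are standard facts and your sketch invokes them correctly, whereas the paper's argument stays purely local and needs none of this global structure.
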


\begin{proof}
Note that the regions $R$ are associated with each component of $f^i$ in every layer being non-zero, i.e, $f_j^i(x) \neq 0, \forall (i,j)$. If for any $(i,j)$ we have $f_j^i(x) = 0$ then $x$ does not lie in a region, but instead on the "edge" of a region. For a given $x$, we thus get a separation of the neurons into three sets, the active neurons $\mathcal{A} = \{(i,j) \in \mathbbm{N} \times \mathbbm{N}: f_j^i(x) > 0\}$, the inactive neurons $\mathcal{I} = \{(i,j) \in \mathbbm{N} \times \mathbbm{N}: f_j^i(x) < 0\}$ and the edge neurons $\mathcal{E} = \{(i,j) \in \mathbbm{N} \times \mathbbm{N}: f_j^i(x) = 0\}$.

If $x$ lies inside a region then the proof is straightforward, so assume that passing $x$ through the network gives rise to the sets $\mathcal{A}, \mathcal{I}, \mathcal{E}$ with $\mathcal{E} \neq \emptyset$.
First, we will show that $x$ lies in the closure of the regions given by distributing the elements of $\mathcal{E}$ to either $\mathcal{A}$ or $\mathcal{I}$, i.e, the closure of the regions associated with the active set
 $\mathcal{A}_{\mathcal{E}}$ and the inactive set $\mathcal{I}_{\mathcal{E}}$ where we have that $e \in \mathcal{A}_{\mathcal{E}} \bigcup \mathcal{I}_{\mathcal{E}}, \forall e \in \mathcal{E}$ and $\mathcal{A} \subset \mathcal{A}_{\mathcal{E}}, \mathcal{I} \subset \mathcal{I}_{\mathcal{E}}$.
Due to the piecewise linearity of $f$, we have that locally the set $\{x \in \mathbb{R}^{n_0} : f^i_j(x) = 0, (i,j) \in \mathcal{E}\}$ is given by an intersection of hyperplanes. Each edge neuron has an associated hyperplane and each hyperplane has an associated normal vector $n_i$, which when we move in that direction will switch that neuron from an edge neuron to an active or inactive neuron. There thus exists constants $c_i\neq 0, i=1,..,|\mathcal{E}|$, a region $R$ and $\epsilon > 0$ such that the point $x + \sum_{i=1}^{|\mathcal{E}|} t c_iv_i \in R$ for all $t\in (0,\epsilon]$. Consequently, the point $x$ lies in $\overline{R}$. 

If the vector $v$ is not given as a linear combination of the normal vectors to the planes but instead points in a general direction,
then for $\epsilon>0$ we can say that the vector $x + \epsilon v$ yields a new set of active $\mathcal{A}_{x,v,\epsilon}$, inactive  $\mathcal{I}_{x,v,\epsilon}$ and edge neurons $\mathcal{E}_{x,v,\epsilon}$. Due to the continuity of ReLU networks, for small $\epsilon>0$ we have that $\mathcal{A} \subset \mathcal{A}_{x,v,\epsilon}$, $\mathcal{I} \subset \mathcal{I}_{x,v,\epsilon}$ and consequently $\mathcal{E}_{x,v,\epsilon} \subset \mathcal{E}$. Thus only some edge neurons switched to either an active or inactive state.
Since by redistributing the elements of $\mathcal{E}$ and $\mathcal{E}_{x,v,\epsilon}$ to active or inactive states we can obtain the same sets of active and inactive neurons, we get that from the argument above that there is a region $R$ such that the point $x$ and $x + tv$ lie in $\overline{R}$ for all $t\in(0,\epsilon]$, from which the desired statement follows.
\end{proof}

\begin{proof} (Proof of Proposition 4)
If $x$ and $y$ are both in a region $R$, then 
$$
f(y)-f(x) = J_f(R)(y-x). 
$$
By continuity of $f$ and convexity of $R$ this is also true when $x$ and $y$ are in the closure $\overline{R}$. Using this 
together with Lemma~\ref{lemma1} we get 
\begin{eqnarray*}
\operatorname{Slope}_f(x) &=& \sup_{v\in B^*}\left(\lim_{x\downarrow0}
\frac{||f(x+vt)-f(x)||_p}t\right) \\
&=& \sup_{v\in B^*}\left(\lim_{t\downarrow0}
\frac{||J(R_{x,v})tv||_p}{t}\right) \\
&=& \sup_{v\in B^*} ||J(R_{x,v})v||_p \\
&\leq & \max_{R: x\in\overline{R}}\,\,\sup_{v\in B^*}||J(R)v||_p \\
&=&
\max_{R: x\in\overline{R}}\,\,\operatorname{Slope}_f(R)
\end{eqnarray*}
\end{proof}

\begin{proof} (Proof of Proposition 5) 
If $f$ is piecewise linear then we construct the function $h: [0,1] \rightarrow \mathbb{R}^{n_c}$ as $h(t) = f(x(1-t) + ty)$. Assume that $h$ is constructed by $m$ different linear sections. Then there exists $m$ intervals $I_j = [t_j, t_{j+1}]$ with $\bigcup_{j=1}^m I_j = [0,1]$ such that $h$ restricted to interval $j$ is a linear function in a region $R_j$. The desired inequality can then be obtained by an application of the triangle inequality as follows,
\begin{align}
    ||f(x) - f(y)||_p = ||h(0) - h(1)||_p &\leq \sum_{j=1}^m||h(t_{j+1}) - h(t_j)||_p\\
    &= \sum_{j=1}^m||J_f(R_j)(y-x)(t_{j+1}-t_j)||_p\\
    &\leq \sum_{j=1}^m||J_f(R_j)||_p||y-x||_p(t_{j+1}-t_j) \\
    &\leq \sum_{j=1}^mK(t_{j+1} - t_j)||x-y||_p\\
    &=K||x-y||_p.
\end{align}
\end{proof}

\begin{proof} (Proof of Proposition 6) 
The relevant term in the loss function is 
$$
-\log\frac{\exp(f_i(x))}{\sum_j\exp(f_j(x))}
$$
where $i$ is the class $x$ is classified into. If $f$ classifies this point correctly we have that $f_i(x)>f_j(x)$ for all $j\neq i$. Thus, for any $c>1$, $c(f_j(x)-f_i(x))< f_j(x)-f_i(x)$. Summing over $j=1,\dots,n$ we get 
$$
\sum_jc(f_j(x)-f_i(x)) < \sum_jf_j(x)-f_i(x)
$$
which is equivalent to 
$$
-\log\frac{\exp(cf_i(x))}{\sum_j\exp(cf_j(x))} < -\log\frac{\exp(f_i(x))}{\sum_j\exp(f_j(x))}.
$$
This shows that the loss at $x$ decreases, as claimed. 
\end{proof}

\begin{proof} (Proof of proposition 7) 
Choose the vector $v$ such that a step in the direction of $v$ multiplies the weights $W_i$ and bias $b_i$ at layer $i$ and the bias at all subsequent layers $j>i$ with a constant $c>0$. For $i=1,...,n$ this step will create a new sequence of functions $f^i_c$. Since no change has been made to parameters in layers $k<i$ we have that $f^i_c = f^i$. For $k\geq i$ we obtain
$$
f^k_c(x)=
\begin{cases}
\operatorname{ReLU}(cW_kf^{k-1}(x)+cb_k) = cf^k(x),~~\textrm{for $k=i$}\\
\operatorname{ReLU}(W_kcf^{k-1}(x)+cb_k)=cf^k(x),~~\textrm{for $k>i$}
\end{cases}
$$
Thus we see that $f^n_c = cf^n = cf$ as desired.
\end{proof}

\begin{lemma}
When $x,y\sim\operatorname{Normal}_n(0, I)$ then, approximately when $n$ is large, $||x-y||\sim\operatorname{Normal}(\sqrt{2n}, 1)$.
\label{lemma2}
\end{lemma}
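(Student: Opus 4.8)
The plan is to reduce the statement to the classical central limit theorem together with the delta method. First I would observe that if $x,y\sim\operatorname{Normal}_n(0,I)$ independently, then $z:=x-y\sim\operatorname{Normal}_n(0,2I)$, so the coordinates $z_1,\dots,z_n$ are i.i.d.\ with $z_i\sim\operatorname{Normal}(0,2)$. Hence $\|x-y\|^2=\sum_{i=1}^n z_i^2$ is a sum of i.i.d.\ terms, and a direct moment computation for a centered normal ($\mathbb{E}[z_i^2]=2$ and $\mathbb{E}[z_i^4]=3\cdot 2^2=12$) shows that each summand has mean $2$ and variance $12-4=8$.

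Next I would apply the CLT to $S_n:=\|x-y\|^2$: it gives that $(S_n-2n)/\sqrt{8n}$ converges in distribution to $\operatorname{Normal}(0,1)$, equivalently $\sqrt{n}\,(S_n/n-2)\to\operatorname{Normal}(0,8)$. I would then transfer this through the smooth map $g(u)=\sqrt{u}$ using the delta method: since $g$ is differentiable at $u=2$ with $g'(2)=1/(2\sqrt{2})$, the delta method yields $\sqrt{n}\bigl(\sqrt{S_n/n}-\sqrt{2}\bigr)\to\operatorname{Normal}\bigl(0,\,8\cdot g'(2)^2\bigr)=\operatorname{Normal}(0,1)$. Since $\sqrt{n}\,\sqrt{S_n/n}=\sqrt{S_n}=\|x-y\|$ and $\sqrt{n}\cdot\sqrt{2}=\sqrt{2n}$, this is precisely the assertion that $\|x-y\|-\sqrt{2n}$ is approximately standard normal for large $n$, i.e.\ $\|x-y\|\sim\operatorname{Normal}(\sqrt{2n},1)$ in the approximate sense intended. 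As a sanity check I would also verify directly that $\mathbb{E}\|x-y\|=\sqrt{2}\,\mathbb{E}[\chi_n]=2\,\Gamma((n+1)/2)/\Gamma(n/2)=\sqrt{2n}+O(1/\sqrt{n})$ and $\operatorname{Var}\|x-y\|=2n-(\mathbb{E}\|x-y\|)^2\to 1$, which matches the claimed parameters.

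The only real subtlety — and the step I would be most careful about — is that $S_n$ itself fluctuates on the scale $\sqrt{n}$ around its mean $2n$, whereas we are claiming that $\sqrt{S_n}$ fluctuates only on the scale $1$; this cancellation is produced by linearizing $\sqrt{\,\cdot\,}$ around $2n$, and one must check that the quadratic remainder in the Taylor expansion, of size $(S_n-2n)^2/n^{3/2}=O_p(n^{-1/2})$, is asymptotically negligible. The delta method packages exactly this argument, so invoking it is legitimate; alternatively one can write $\sqrt{S_n}-\sqrt{2n}=(S_n-2n)/(\sqrt{S_n}+\sqrt{2n})$ and combine the CLT for $S_n$ with $\sqrt{S_n}/\sqrt{2n}\to 1$ in probability via Slutsky's theorem. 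Either route gives the claimed Gaussian approximation, with "$\sim$" understood as convergence in distribution of the suitably centered quantity as $n\to\infty$.
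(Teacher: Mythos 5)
Your proposal is correct and follows essentially the same route as the paper: the paper observes that $\|x-y\|^2/2\sim\chi^2_n$ exactly and invokes its normal approximation $\operatorname{Normal}(n,2n)$ (which is just the CLT you apply coordinate-wise to the $z_i^2$), and then passes to $\|x-y\|\sim\operatorname{Normal}(\sqrt{2n},1)$ "as a further approximation". The only difference is that you make that final square-root step rigorous via the delta method (or the Slutsky variant you mention), which is exactly the step the paper leaves informal — a sharpening of the same argument rather than a different proof.
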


\begin{proof}
We get $(x-y)/\sqrt{2}\sim\operatorname{Normal}(0, I)$ and 
$||x-y||^2/2\sim\chi^2_n$, so as a first-order approximation when $n$ is large, 
$$
\frac12||x-y||^2\sim\operatorname{Normal}(n, 2n)
$$
and as a further approximation when $n$ is large, 
$$
||x-y||\sim\operatorname{Normal}(\sqrt{2n}, 1). 
$$
\end{proof}

\subsection{Experimental details}
\label{sec:exp_details}
All experiments are carried out in PyTorch \cite{NEURIPS2019_9015} and the code can be obtained on github\footnote{\href{https://github.com/antonFJohansson/slope_and_generalization}{github.com/antonFJohansson/slope\_and\_generalization} }. Every network is trained for 150 epochs and 750 random training points are chosen at the beginning of the training where the slope is subsequently measured at every epoch. To connect the slope with the generalization properties of the models we need to obtain a model that has a good generalization capability, i.e, we want to obtain an "optimal" model. This is done by choosing the optimal model to be the model at the epoch where the lowest validation loss was obtained when training the network for 150 epochs. Every network was trained with Stochastic Gradient Descent with a learning rate of 0.001, momentum of 0.8, batch size of 64 and every experiment is repeated 5 times.

The considered fully connected model architectures are obtained by varying the number of hidden layers as 1,2,3 and varying the number of neurons in the hidden layers (every hidden layer has the same number of neurons) as 25,50,100,200.

For the convolutional model architectures we vary the number of hidden layers as 3,6 and the number of channels (all hidden layers has the same number of channels) as 3,6,12,24. After the convolutional layer the feature representation is flattened and fed through a fully connected layer to the final output layer. All convolutional layers uses padding such that the feature representation maintains the same shape throughout the network.

\subsection{Info regarding the optimal models}
The validation accuracies and validation losses for all of the optimal models, both convolutional and fully connected, can be found in Table \ref{tab:acc_fc} - \ref{tab:loss_conv}.
All tables contain the mean and standard deviation of 5 runs.
While some models achieve a higher accuracy/lower loss than others, it can be argued that all models generalize. 
\begin{table}[h]
    \centering
    \begin{tabular}{lllll}
\toprule
{} &                 MNIST &                KMNIST &          FashionMNIST &          Forest Cover \\
\midrule
(1, 25)  &  $0.965 \pm 1.13e-03$ &  $0.819 \pm 3.00e-03$ &  $0.869 \pm 1.72e-03$ &  $0.718 \pm 4.60e-03$ \\
(1, 50)  &  $0.975 \pm 8.15e-04$ &  $0.856 \pm 2.16e-03$ &  $0.876 \pm 1.62e-03$ &  $0.721 \pm 2.62e-03$ \\
(1, 100) &  $0.979 \pm 1.85e-04$ &  $0.878 \pm 2.19e-03$ &  $0.882 \pm 1.39e-03$ &  $0.723 \pm 2.18e-03$ \\
(1, 200) &  $0.981 \pm 7.93e-04$ &  $0.891 \pm 7.52e-04$ &  $0.887 \pm 5.95e-04$ &  $0.728 \pm 2.08e-03$ \\
(2, 25)  &  $0.965 \pm 8.84e-04$ &  $0.822 \pm 2.64e-03$ &  $0.869 \pm 2.02e-03$ &  $0.722 \pm 1.56e-03$ \\
(2, 50)  &  $0.973 \pm 5.04e-04$ &  $0.852 \pm 2.07e-03$ &  $0.876 \pm 5.12e-04$ &  $0.727 \pm 3.41e-03$ \\
(2, 100) &  $0.977 \pm 5.84e-04$ &  $0.875 \pm 2.80e-03$ &  $0.878 \pm 6.69e-04$ &  $0.730 \pm 2.12e-03$ \\
(2, 200) &  $0.980 \pm 8.93e-04$ &  $0.886 \pm 2.49e-03$ &  $0.883 \pm 1.39e-03$ &  $0.734 \pm 2.58e-03$ \\
(3, 25)  &  $0.963 \pm 1.24e-03$ &  $0.815 \pm 3.84e-03$ &  $0.864 \pm 4.36e-03$ &  $0.724 \pm 5.11e-03$ \\
(3, 50)  &  $0.971 \pm 7.88e-04$ &  $0.846 \pm 4.88e-03$ &  $0.872 \pm 1.40e-03$ &  $0.725 \pm 3.50e-03$ \\
(3, 100) &  $0.975 \pm 6.91e-04$ &  $0.863 \pm 3.25e-03$ &  $0.876 \pm 8.08e-04$ &  $0.732 \pm 4.27e-03$ \\
(3, 200) &  $0.978 \pm 7.91e-04$ &  $0.877 \pm 2.88e-03$ &  $0.879 \pm 1.67e-03$ &  $0.739 \pm 3.98e-03$ \\
\bottomrule
\end{tabular}

    \caption{Validation accuracies for the optimal fully connected models for each data set.}
    \label{tab:acc_fc}
\end{table}

\begin{table}[h]
    \centering
    
\begin{tabular}{lllll}
\toprule
{} &                  MNIST &                 KMNIST &           FashionMNIST &           Forest Cover \\
\midrule
(1, 25)  &  $0.0019 \pm 4.97e-05$ &  $0.0097 \pm 9.81e-05$ &  $0.0058 \pm 7.17e-05$ &  $0.0106 \pm 3.68e-05$ \\
(1, 50)  &  $0.0013 \pm 4.39e-05$ &  $0.0079 \pm 8.03e-05$ &  $0.0055 \pm 3.34e-05$ &  $0.0105 \pm 6.94e-05$ \\
(1, 100) &  $0.0011 \pm 3.09e-05$ &  $0.0067 \pm 7.18e-05$ &  $0.0053 \pm 2.73e-05$ &  $0.0104 \pm 4.54e-05$ \\
(1, 200) &  $0.0010 \pm 1.61e-05$ &  $0.0060 \pm 4.32e-05$ &  $0.0051 \pm 2.57e-05$ &  $0.0103 \pm 3.88e-05$ \\
(2, 25)  &  $0.0019 \pm 4.87e-05$ &  $0.0098 \pm 2.99e-04$ &  $0.0058 \pm 6.17e-05$ &  $0.0105 \pm 9.92e-05$ \\
(2, 50)  &  $0.0014 \pm 2.82e-05$ &  $0.0082 \pm 2.80e-05$ &  $0.0055 \pm 3.31e-05$ &  $0.0103 \pm 7.35e-05$ \\
(2, 100) &  $0.0012 \pm 5.71e-05$ &  $0.0070 \pm 1.39e-04$ &  $0.0054 \pm 2.72e-05$ &  $0.0101 \pm 6.72e-05$ \\
(2, 200) &  $0.0010 \pm 3.29e-05$ &  $0.0064 \pm 1.19e-04$ &  $0.0053 \pm 5.91e-05$ &  $0.0100 \pm 3.60e-05$ \\
(3, 25)  &  $0.0020 \pm 4.42e-05$ &  $0.0103 \pm 2.74e-04$ &  $0.0061 \pm 1.65e-04$ &  $0.0105 \pm 7.99e-05$ \\
(3, 50)  &  $0.0016 \pm 4.50e-05$ &  $0.0086 \pm 2.00e-04$ &  $0.0058 \pm 1.01e-04$ &  $0.0103 \pm 6.18e-05$ \\
(3, 100) &  $0.0013 \pm 2.80e-05$ &  $0.0076 \pm 2.09e-04$ &  $0.0056 \pm 4.08e-05$ &  $0.0101 \pm 6.51e-05$ \\
(3, 200) &  $0.0012 \pm 3.34e-05$ &  $0.0069 \pm 4.06e-05$ &  $0.0055 \pm 2.60e-05$ &  $0.0099 \pm 4.22e-05$ \\
\bottomrule
\end{tabular}

    \caption{Validation losses for the optimal fully connected models for each data set.}
    \label{tab:loss_fc}
\end{table}

\begin{table}[h]
    \centering
    \begin{tabular}{llll}
\toprule
{} &                 MNIST &                KMNIST &          FashionMNIST \\
\midrule
(3, 3)  &  $0.980 \pm 1.20e-03$ &  $0.851 \pm 5.88e-03$ &  $0.889 \pm 3.32e-03$ \\
(3, 6)  &  $0.983 \pm 1.24e-03$ &  $0.879 \pm 6.71e-03$ &  $0.900 \pm 2.94e-03$ \\
(3, 12) &  $0.985 \pm 1.13e-03$ &  $0.899 \pm 3.31e-04$ &  $0.903 \pm 2.13e-03$ \\
(3, 24) &  $0.986 \pm 9.83e-04$ &  $0.907 \pm 1.34e-03$ &  $0.908 \pm 6.95e-04$ \\
(6, 3)  &  $0.983 \pm 9.60e-04$ &  $0.862 \pm 8.26e-03$ &  $0.893 \pm 2.12e-03$ \\
(6, 6)  &  $0.984 \pm 1.84e-03$ &  $0.886 \pm 3.11e-03$ &  $0.898 \pm 4.92e-03$ \\
(6, 12) &  $0.985 \pm 1.56e-03$ &  $0.896 \pm 6.14e-03$ &  $0.900 \pm 2.45e-03$ \\
(6, 24) &  $0.985 \pm 8.50e-04$ &  $0.904 \pm 4.00e-04$ &  $0.900 \pm 2.46e-03$ \\
\bottomrule
\end{tabular}
    \caption{Validation accuracies for the optimal convolutional networks for each data set.}
    \label{tab:acc_conv}
\end{table}

\begin{table}[h]
    \centering
    \begin{tabular}{llll}
\toprule
{} &                  MNIST &                 KMNIST &           FashionMNIST \\
\midrule
(3, 3)  &  $0.0010 \pm 1.05e-04$ &  $0.0087 \pm 2.20e-04$ &  $0.0050 \pm 1.44e-04$ \\
(3, 6)  &  $0.0008 \pm 7.39e-05$ &  $0.0070 \pm 2.33e-04$ &  $0.0045 \pm 1.56e-04$ \\
(3, 12) &  $0.0007 \pm 3.35e-05$ &  $0.0061 \pm 1.16e-04$ &  $0.0044 \pm 1.06e-04$ \\
(3, 24) &  $0.0006 \pm 4.04e-05$ &  $0.0056 \pm 1.77e-04$ &  $0.0043 \pm 6.18e-05$ \\
(6, 3)  &  $0.0009 \pm 6.79e-05$ &  $0.0079 \pm 5.92e-04$ &  $0.0048 \pm 1.94e-04$ \\
(6, 6)  &  $0.0008 \pm 8.20e-05$ &  $0.0070 \pm 1.61e-04$ &  $0.0045 \pm 2.35e-04$ \\
(6, 12) &  $0.0008 \pm 4.61e-05$ &  $0.0064 \pm 3.00e-04$ &  $0.0045 \pm 9.90e-05$ \\
(6, 24) &  $0.0007 \pm 2.98e-05$ &  $0.0058 \pm 1.85e-04$ &  $0.0045 \pm 8.23e-05$ \\
\bottomrule
\end{tabular}
    \caption{Validation losses for the optimal convolutional models for each data set}
    \label{tab:loss_conv}
\end{table}

\end{document}